\newtheorem{theorem}{Theorem}
\newtheorem{lemma}[theorem]{Lemma}
\newtheorem{notation}[theorem]{Notation}
\newenvironment{proof}[1][Proof]{\noindent\textbf{#1.} }{\ \rule{0.5em}{0.5em}}
\begin{document}

\title{An optimization problem on the sphere}
\author{Andreas Maurer \\
Adalbertstr 55\\
D80799 M\"{u}nchen}
\maketitle

\begin{abstract}
We prove existence and uniqueness of the minimizer for the average geodesic
distance to the points of a geodesically convex set on the sphere. This
implies a corresponding existence and uniqueness result for an optimal
algorithm for halfspace learning, when data and target functions are drawn
from the uniform distribution.
\end{abstract}

\section{Introduction}

Let $\mathcal{S}^{n-1}$ be the unit sphere in $%
\mathbb{R}
^{n}$ with normalized uniform measure $\sigma $ and geodesic metric $\rho $,
and let $K$ be a proper convex cone with nonempty interior in $%
\mathbb{R}
^{n}$. We will show that the function $\psi :\mathcal{S}^{n-1}\rightarrow 
\mathbb{R}
$ defined by%
\[
\psi _{K}\left( w\right) =\int_{K\cap \mathcal{S}^{n-1}}\rho \left(
w,y\right) d\sigma \left( y\right) 
\]%
attains its global minimum at a unique point on $\mathcal{S}^{n-1}$. While
existence of the minimum is straightforward, uniqueness seems surprisingly
difficult to prove.

A similar problem has been considered in \cite{Galperin 1993} and \cite{Buss
2001}. In these works the intention is to define a centroid, so integration
is replaced by finite summation and $\rho \left( w,y\right) $ replaced by $%
\rho \left( w,y\right) ^{2}$. Since the problem is rather obvious, it
appears likely that a proof of the above result exists somewhere in the
literature and we just haven't been able to find it.

\section{Optimal halfspace learning}

Our motivation to consider this problem arises in learning theory.
Specifically we consider an experiment, where

\begin{enumerate}
\item A unit vector $u$ is drawn at random from $\sigma $ and kept concealed
from the learner.

\item A sample $\mathbf{x}=\left( x_{1},...,x_{m}\right) \in \left( \mathcal{%
S}^{n-1}\right) ^{m}$ is generated in $m$ independent random trials of $%
\sigma $.

\item A label vector $\mathbf{y}=u\left( \mathbf{x}\right) \in \left\{
-1,1\right\} ^{m}$ is generated according to the rule $y_{i}=sign\left(
\left\langle u,x_{i}\right\rangle \right) $, where $\left\langle
.,.\right\rangle $ is the euclidean inner product and $sign\left( t\right)
=1 $ if $t>0$ and $-1$ if $t<0$. The $sign$ of $0$ is irrelevant, because it
corresponds to events of probability zero.

\item The labeled sample $\left( \mathbf{x},\mathbf{y}\right) =\left( 
\mathbf{x},u\left( \mathbf{x}\right) \right) $ is supplied to the learner.

\item The learner produces a hypothesis $f\left( \mathbf{x},\mathbf{y}%
\right) \in \mathcal{S}^{n-1}$ according to some learning rule $f:\left( 
\mathcal{S}^{n-1}\right) ^{m}\times \left\{ -1,1\right\} ^{m}\rightarrow 
\mathcal{S}^{n-1}$.

\item An unlabeled test point $x\in \mathcal{S}^{n-1}$ is drawn at random
from $\sigma $ and presented to the learner who produces the label $%
y=sign\left( \left\langle f\left( \mathbf{x},\mathbf{y}\right)
,x\right\rangle \right) $.

\item If $sign\left( \left\langle u,x_{i}\right\rangle \right) =y$ the
learner is rewarded one unit, otherwise a penalty of one unit is incurred.
\end{enumerate}

We now ask the following question: Which learning rule $f$ will give the
highest average reward on a very large number of independent repetitions of
this experiment?

Evidently the optimal learning rule has to minimize the following functional:%
\[
\Omega \left( f\right) =\mathbb{E}_{u\sim \sigma }\mathbb{E}_{\mathbf{x}\sim
\sigma ^{m}}\Pr_{x\sim \sigma }\left\{ sign\left( \left\langle f\left( 
\mathbf{x},u\left( \mathbf{x}\right) \right) ,x\right\rangle \right) \neq
sign\left( \left\langle u,x\right\rangle \right) \right\} . 
\]%
Now a simple geometric argument shows that for any $v,u\in \mathcal{S}^{n-1}$
we have%
\[
\Pr_{x\sim \sigma }\left\{ sign\left( \left\langle v,x\right\rangle \right)
\neq sign\left( \left\langle u,x\right\rangle \right) \right\} =\rho \left(
v,u\right) /\pi , 
\]%
relating the misclassification probability to the geodesic distance. For a
labeled sample $\left( \mathbf{x},\mathbf{y}\right) \in \left( \mathcal{S}%
^{n-1}\right) ^{m}\times \left\{ -1,1\right\} ^{m}$ we denote 
\[
C\left( \mathbf{x},\mathbf{y}\right) =\left\{ u\in \mathcal{S}^{n-1}:u\left( 
\mathbf{x}\right) =\mathbf{y}\right\} \text{.} 
\]%
$C\left( \mathbf{x},\mathbf{y}\right) $ is thus the set of all hypotheses
consistent with the labeled sample $\left( \mathbf{x},\mathbf{y}\right) $.
Observe that, given $\mathbf{x}$ and $u$ there is exactly one $\mathbf{y}$
such that $\mathbf{y}=u\left( \mathbf{x}\right) $, that is $u\in C\left( 
\mathbf{x},\mathbf{y}\right) $. We also have $C\left( \mathbf{x},\mathbf{y}%
\right) =K\left( \mathbf{x},\mathbf{y}\right) \cap \mathcal{S}^{n-1}$ where $%
K\left( \mathbf{x},\mathbf{y}\right) $ is the closed convex cone%
\[
K\left( \mathbf{x},\mathbf{y}\right) =\left\{ v\in 
\mathbb{R}
^{n}:\left\langle u,y_{i}x_{i}\right\rangle \geq 0,\forall \,1\leq i\leq
m\right\} . 
\]%
We therefore obtain%
\begin{eqnarray*}
\Omega \left( f\right) &=&\pi ^{-1}\mathbb{E}_{u\sim \sigma }\mathbb{E}_{%
\mathbf{x}\sim \sigma ^{m}}\rho \left( f\left( \mathbf{x},u\left( \mathbf{x}%
\right) \right) ,u\right) \\
&=&\pi ^{-1}\mathbb{E}_{\mathbf{x}\sim \sigma ^{m}}\sum_{\mathbf{y}\in
\left\{ -1,1\right\} ^{m}}\mathbb{E}_{u\sim \sigma }\rho \left( f\left( 
\mathbf{x},u\left( \mathbf{x}\right) \right) ,u\right) 1_{C\left( \mathbf{x},%
\mathbf{y}\right) }\left( u\right) \\
&=&\pi ^{-1}\mathbb{E}_{\mathbf{x}\sim \sigma ^{m}}\sum_{\mathbf{y}\in
\left\{ -1,1\right\} ^{m}}\mathbb{E}_{u\sim \sigma }\rho \left( f\left( 
\mathbf{x},\mathbf{y}\right) ,u\right) 1_{C\left( \mathbf{x},\mathbf{y}%
\right) }\left( u\right) \\
&=&\pi ^{-1}\mathbb{E}_{\mathbf{x}\sim \sigma ^{m}}\sum_{\mathbf{y}\in
\left\{ -1,1\right\} ^{m}}\psi _{K\left( \mathbf{x},\mathbf{y}\right)
}\left( f\left( \mathbf{x},\mathbf{y}\right) \right) .
\end{eqnarray*}%
If $K\left( \mathbf{x},\mathbf{y}\right) $ has empty interior then the
corresponding summand vanishes, so we can assume that $K\left( \mathbf{x},%
\mathbf{y}\right) $ has nonempty interior. Clearly $-y_{i}x_{i}\notin
K\left( \mathbf{x},\mathbf{y}\right) $ for all example points, so $K\left( 
\mathbf{x},\mathbf{y}\right) $ is a proper cone. Our result therefore
applies and asserts the existence of a unique minimizer $f^{\ast }\left( 
\mathbf{x},\mathbf{y}\right) $ of the function $\psi _{K\left( \mathbf{x},%
\mathbf{y}\right) }$. The map $f^{\ast }:\left( \mathbf{x},\mathbf{y}\right)
\mapsto f^{\ast }\left( \mathbf{x},\mathbf{y}\right) $ is then the unique
optimal learning algorithm.

The map $f^{\ast }$ also has the symmetry property $f^{\ast }\left( V\mathbf{%
x},\mathbf{y}\right) =Vf^{\ast }\left( \mathbf{x},\mathbf{y}\right) $ for
any unitary $V$ on $%
\mathbb{R}
^{n}$. This is so, because 
\[
\psi _{K\left( V\mathbf{x},\mathbf{y}\right) }\left( w\right) =\psi
_{K\left( \mathbf{x},\mathbf{y}\right) }\left( V^{-1}w\right) , 
\]%
as is easily verified. We will also show, that the solution $f^{\ast }\left( 
\mathbf{x},\mathbf{y}\right) $ must lie in the cone%
\[
\left\{ \sum_{i=1}^{m}\alpha _{i}y_{i}x_{i}:\alpha _{i}\geq 0\right\} 
\]%
and that $\psi _{K\left( \mathbf{x},\mathbf{y}\right) }$ has no other local
minima.

\section{Proof of the main result}

\begin{notation}
$\rho \left( .,.\right) $ is the geodesic distance and $\sigma $ the Haar
measure on $\mathcal{S}^{n-1}$. For $A\subseteq 
\mathbb{R}
^{n}$ we denote $A_{1}=\left\{ x\in A:\left\Vert x\right\Vert =1\right\}
=A\cap \mathcal{S}^{n-1}$. 'Cone' will always mean 'convex cone'. For $%
A\subseteq 
\mathbb{R}
^{n}$ we denote%
\[
\hat{A}=\left\{ x:\left\langle x,v\right\rangle \geq 0,\forall v\in
A\right\} . 
\]%
This is always a closed convex set. A proper cone $K$ is one which is
contained in some closed halfspace. For a set $A$ the indicator function of $%
A$ will be denoted by $1_{A}$\bigskip
\end{notation}

\begin{lemma}
Let $K$ be a closed cone

(i) If $w\notin K$ then there is a unit vector $z\in 
\mathbb{R}
^{n}$ such that $\left\langle z,w\right\rangle <0$ and $\left\langle
z,y\right\rangle \geq 0$ for all $y\in K$.

(ii) $\left( \hat{K}\right) ^{\symbol{94}}=K\,$.

(iii) Suppose that $K$ is proper and has nonempty interior, $w\in \mathcal{S}%
^{n-1}$, $w\notin \hat{K}\cup \left( -\hat{K}\right) $ and $\epsilon >0$.
Then there exists $z$ with $\left\Vert z\right\Vert =1$ such that $-\epsilon
<\left\langle z,w\right\rangle <0$ and $\left\langle z,y\right\rangle >0$
for all $y\in \hat{K}\backslash \left\{ 0\right\} $.
\end{lemma}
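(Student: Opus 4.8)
The plan is to treat parts (i) and (ii) as essentially standard facts about dual cones and to concentrate the real work on (iii). For (i) I would apply the strict separation theorem to the point $w$ and the closed convex set $K$, obtaining a nonzero $a$ and a scalar $c$ with $\langle a,w\rangle<c<\langle a,y\rangle$ for all $y\in K$; putting $y=0$ gives $c<0$, and since $\lambda y\in K$ for every $\lambda\ge 0$, letting $\lambda\to\infty$ in $\lambda\langle a,y\rangle>c$ forces $\langle a,y\rangle\ge 0$ on $K$. Then $z=a/\left\Vert a\right\Vert$ does the job. Part (ii) follows immediately: the inclusion $K\subseteq\hat{\hat{K}}$ is clear from the definitions, and if $w\notin K$ the vector $z$ just produced lies in $\hat{K}$ and has $\langle z,w\rangle<0$, so $w\notin\hat{\hat{K}}$.

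For (iii) I would first set aside the case $n=1$, where $\hat{K}$ is a closed half-line and $\hat{K}\cup(-\hat{K})=\mathbb{R}$, so the hypothesis is vacuous; assume $n\ge 2$. Write $U$ for the interior of $K$, so that by hypothesis $U$ is a nonempty open convex cone with $\overline{U}=K$. The first reduction is the observation that the condition $\langle z,y\rangle>0$ for all $y\in\hat{K}\backslash\{0\}$ is automatic for every unit $z\in U$: by (ii) we have $z\in\hat{\hat{K}}$, so $\langle z,y\rangle\ge 0$ on $\hat{K}$, and if $\langle z,y\rangle=0$ for some $y\in\hat{K}\backslash\{0\}$ then, $z$ being interior, $z-\delta y\in K=\hat{\hat{K}}$ for small $\delta>0$, and since $y\in\hat{K}$ this gives $0\le\langle z-\delta y,y\rangle=-\delta\left\Vert y\right\Vert^{2}<0$, a contradiction. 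Hence it suffices to produce a single unit vector $z\in U$ with $-\epsilon<\langle z,w\rangle<0$.

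To get such a $z$, I would use the two parts of the hypothesis $w\notin\hat{K}\cup(-\hat{K})$ separately: $w\notin\hat{K}$ supplies some $v_{0}\in K$ with $\langle v_{0},w\rangle<0$, and since $U$ is dense in $K$ this can be replaced by a unit vector $v\in U$ with $\langle v,w\rangle<0$; likewise $w\notin-\hat{K}$, i.e.\ $-w\notin\hat{K}$, yields a unit vector $v^{\prime}\in U$ with $\langle v^{\prime},w\rangle>0$, and a small perturbation within the relatively open set $U\cap\mathcal{S}^{n-1}$ (which preserves $\langle v^{\prime},w\rangle>0$) lets us also assume $v^{\prime}\ne-v$. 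Now the straight segment from $v$ to $v^{\prime}$ lies in the convex cone $U$ and avoids the origin --- a chord joining two distinct unit vectors passes through $0$ only when they are antipodal --- so its radial projection onto the sphere, which again lies in the cone $U$, is a continuous path in $U\cap\mathcal{S}^{n-1}$ from $v$ to $v^{\prime}$. Along this path $\langle\cdot,w\rangle$ is continuous, strictly negative at one endpoint and strictly positive at the other, so by the intermediate value theorem it assumes every value in an open interval about $0$, in particular some value in $(-\epsilon,0)$; the corresponding point is the desired $z$, which by the reduction above automatically satisfies $\langle z,y\rangle>0$ on $\hat{K}\backslash\{0\}$.

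I expect the one genuinely delicate point to be keeping the connecting curve inside $\mathrm{int}\,K\cap\mathcal{S}^{n-1}$: that is exactly why one should join $v$ to $v^{\prime}$ by the radial projection of a line segment, which stays in the open convex cone $U$ for free, rather than by an arbitrary path, and why the antipodal configuration $v^{\prime}=-v$ must be ruled out by a perturbation. The low-dimensional degeneracy and the behaviour of $\langle\cdot,w\rangle$ near its zero crossing on the path are minor matters by comparison, and (i) and (ii) are routine.
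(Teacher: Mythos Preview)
Your proofs of (i) and (ii) are correct and essentially the same as the paper's: separate $w$ from the closed cone $K$, then use (i) to get the nontrivial inclusion in (ii). The only cosmetic difference is that the paper separates the open cone generated by a ball around $w$ from $K$, while you separate the single point $w$; both work in $\mathbb{R}^{n}$.

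For (iii) your argument is correct but organized differently from the paper's. Both begin the same way: from $w\notin\hat{K}$ and $w\notin-\hat{K}$ one obtains interior points of $K$ on which $\langle\cdot,w\rangle$ has opposite signs, and the intermediate value theorem on the connecting segment produces an interior point $x_{0}$ with $\langle x_{0},w\rangle=0$. The paper then builds $z$ explicitly as $z=(1-\eta)^{1/2}x_{0}-\eta^{1/2}w$ and verifies $\langle z,y\rangle>0$ on $\hat{K}_{1}$ by a direct estimate using a ball of radius $c$ about $x_{0}$ contained in $K$. You instead isolate once and for all the lemma that \emph{every} unit $z\in\mathrm{int}\,K$ satisfies $\langle z,y\rangle>0$ on $\hat{K}\setminus\{0\}$ (your $z-\delta y$ argument), which reduces the task to finding any unit $z\in\mathrm{int}\,K$ with $\langle z,w\rangle\in(-\epsilon,0)$; this you get by the intermediate value theorem along the radially projected segment. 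Your route is a bit more conceptual and avoids the explicit constant $c$; the paper's route gives a concrete formula for $z$. One small remark: the perturbation to ensure $v'\neq-v$ is harmless but unnecessary, because $K$ is proper, hence contained in a closed halfspace $\{\langle a,\cdot\rangle\ge0\}$, and any interior point $v$ satisfies $\langle a,v\rangle>0$, so $-v\notin K$.
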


\begin{proof}
(i) Let $B$ be an open ball containing $w$ such that $K\cap B=\emptyset $.
Define 
\[
O=\left\{ \lambda x:x\in B,\lambda >0\right\} \text{.} 
\]%
Then $K$ and $O$ are nonempty disjoint convex sets and $O$ is open. By the
Hahn-Banach theorem (\cite{Rudin 1974}, Theorem 3.4) there is $\gamma \in 
\mathbb{R}
$ and $z\in 
\mathbb{R}
^{n}$ such that 
\[
\left\langle z,x\right\rangle <\gamma \leq \left\langle z,y\right\rangle
,\forall x\in O,y\in K. 
\]%
Choosing $y=0\in K$ gives $\gamma \leq 0$, letting $\lambda \rightarrow 0$
in $\left\langle z,\lambda w\right\rangle <\gamma $ shows $\gamma \geq 0$,
so that $\gamma =0$. The normalization is trivial.

(ii) Trivially $K\subseteq \left( \hat{K}\right) ^{\symbol{94}}$. On the
other hand, if $w\notin K$ let $z$ be the vector from part (i). Then $z\in 
\hat{K}$ but $\left\langle w,z\right\rangle <0$, so that $w\notin \left( 
\hat{K}\right) ^{\symbol{94}}$.

(iii) Since $w\notin \hat{K}$ there exists $x_{1}\in K$ s.t. $\left\langle
w,x_{1}\right\rangle <0$. Since the interior of $K$ is nonempty, $K$ is the
closure of its interior (Theorem 6.3 in \cite{Rockafellar 1970}), so we can
assume $x_{1}\in $ int$\left( K\right) $. Similarily, since $w\notin \left( -%
\hat{K}\right) $ we have $-w\notin \hat{K}$, so there is $x_{2}\in $ int$%
\left( K\right) $ with $\left\langle -w,x_{2}\right\rangle <0$, that is $%
\left\langle w,x_{2}\right\rangle >0$. Since the interior of $K$ is convex
it contains the segment $\left[ x_{1},x_{2}\right] $, so by continuity of $%
\left\langle w,\cdot \right\rangle $ there is some $x_{0}\in $ int$\left(
K\right) $ with $\left\langle w,x_{0}\right\rangle =0$. Since $K$ is a
proper cone $0\notin $ int$\left( K\right) $ and we can assume that $%
\left\Vert x_{0}\right\Vert =1$.

Let $c>0$ be such that $x^{\prime }\in K$ whenever $\left\Vert
x_{0}-x^{\prime }\right\Vert \leq c$. We define 
\[
z=\left( 1-\eta \right) ^{1/2}x_{0}-\eta ^{1/2}w\text{, where }0<\eta <\min
\left\{ \frac{c^{2}}{1+c^{2}},\epsilon ^{2}\right\} \text{.} 
\]%
Since $\left\langle w,x_{0}\right\rangle =0$ it is clear that $z$ is a unit
vector. Also $\left\langle w,z\right\rangle =-\eta ^{1/2}>-\epsilon $, and
for any $y\in \hat{K}_{1}$ we have $x_{0}-cy\in K$, so $\left\langle
y,x_{0}-cy\right\rangle \geq 0$ and%
\begin{eqnarray*}
\left\langle y,z\right\rangle &=&\left( 1-\eta \right) ^{1/2}\left(
\left\langle y,x_{0}-cy\right\rangle +c\left\langle y,y\right\rangle \right)
-\eta ^{1/2}\left\langle y,w\right\rangle \\
&\geq &\left( 1-\eta \right) ^{1/2}c-\eta ^{1/2}>0.
\end{eqnarray*}
\end{proof}

\begin{theorem}
Let $K\subset 
\mathbb{R}
^{n-1}$ be a closed proper cone with nonempty interior, $g:\left[ 0,\pi %
\right] \rightarrow 
\mathbb{R}
$ continuous and the function $\psi :\mathcal{S}^{n-1}\rightarrow 
\mathbb{R}
$ defined by%
\[
\psi \left( w\right) =\int_{K_{1}}g\left( \rho \left( w,y\right) \right)
d\sigma \left( y\right) . 
\]

(i) $\psi $ attains its global minimum on $\mathcal{S}^{n-1}$.

(ii) If $g$ is increasing then every local minimum of $\psi $ must lie in $%
\hat{K}\cup \left( -\hat{K}\right) $ and every global minimum of $\psi $
must lie in $K\cap \hat{K}$.

(iii) If $g$ is increasing and convex in $\left[ 0,\pi /2\right] $ then the
global minimum of $\psi $ is unique and corresponds to the only local
minimum outside $-\hat{K}$.

(iv) If $g$ is increasing, convex in $\left[ 0,\pi /2\right] $ and concave
in $\left[ \pi /2,\pi \right] $ then the global minimum of $\psi $ is unique
and corresponds to its only local minimum on $\mathcal{S}^{n-1}$.
\end{theorem}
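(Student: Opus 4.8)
The plan is to handle the four claims roughly in the order they are stated, since each builds on the previous one, and to reduce everything about $\psi$ to one-variable calculus along geodesics.

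The plan is to reduce every statement about $\psi$ to one–variable calculus along great circles. Fix $w\in\mathcal{S}^{n-1}$, a unit tangent vector $v\perp w$, and the unit–speed geodesic $\gamma(t)=w\cos t+v\sin t$; then $\rho(\gamma(t),y)=\arccos\bigl(\langle w,y\rangle\cos t+\langle v,y\rangle\sin t\bigr)$, and differentiating,
\[
\frac{d}{dt}\,\rho(\gamma(t),y)\Big|_{0}=-\frac{\langle v,y\rangle}{\sin\rho},\qquad
\frac{d^{2}}{dt^{2}}\,\rho(\gamma(t),y)\Big|_{0}=\frac{\langle w,y\rangle\bigl(\sin^{2}\rho-\langle v,y\rangle^{2}\bigr)}{\sin^{3}\rho},
\]
where $\rho=\rho(w,y)$. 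Since $\langle v,y\rangle^{2}\le\sin^{2}\rho(w,y)$ always, the bracket is nonnegative, so $t\mapsto\rho(\gamma(t),y)$ is convex when $\langle w,y\rangle\ge0$ (i.e. $\rho\le\pi/2$) and concave when $\langle w,y\rangle\le0$ (i.e. $\rho\ge\pi/2$), and strictly so unless $y\in\mathrm{span}\{w,v\}$. Part (i) is then immediate: $g$ is uniformly continuous and $w\mapsto\rho(w,y)$ is $1$–Lipschitz uniformly in $y$, so $\psi$ is continuous on the compact $\mathcal{S}^{n-1}$ and attains its minimum.

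For (ii) I would first show every global minimiser lies in $K$. If $w^{\ast}\notin K=\hat{\hat{K}}$ (part (ii) of the Lemma), choose a unit $z\in\hat{K}$ with $\langle w^{\ast},z\rangle<0$; writing $R_{z}x=x-2\langle x,z\rangle z$ for reflection in $z^{\perp}$, one has $\langle R_{z}w^{\ast},y\rangle=\langle w^{\ast},y\rangle-2\langle w^{\ast},z\rangle\langle z,y\rangle\ge\langle w^{\ast},y\rangle$ for every $y\in K_{1}$ because $z\in\hat{K}$, hence $\psi(R_{z}w^{\ast})\le\psi(w^{\ast})$; equality would force $\langle z,y\rangle=0$ for a.e. $y\in K_{1}$, impossible since $K$ is full–dimensional, so $\psi(R_{z}w^{\ast})<\psi(w^{\ast})$ — a contradiction. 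For the local–minimum statement, suppose $w\notin\hat{K}\cup(-\hat{K})$; by part (iii) of the Lemma, for every small $\epsilon>0$ there is a unit $z$ with $-\epsilon<\langle z,w\rangle<0$ and $\langle z,y\rangle>0$ for all $y\in\hat{K}\setminus\{0\}$, so $R_{z}w$ lies within $2\epsilon$ of $w$. One then argues $\psi(R_{z}w)<\psi(w)$ by splitting $K_{1}$ along the sign of $\langle z,y\rangle$ and weighing the gain on $\{\langle z,y\rangle>0\}$ against the loss on the remainder, using that $z$ sits deep inside $K$ and that the interior point of $K$ on the equator $w^{\perp}$ (produced in the proof of the Lemma) keeps the bad part under control; as $\epsilon$ is arbitrary, this contradicts local minimality. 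Since every global minimiser is a local one and $K\cap(-\hat{K})=\{0\}$ contains no unit vector, every global minimiser then lies in $K\cap\hat{K}$. The local–minimum half is the delicate point: the separating direction from the Lemma controls inner products only against $\hat{K}$, while $\psi$ integrates over $K_{1}$, and for a wide cone $K_{1}$ genuinely contains points $y$ with $\langle z,y\rangle<0$ that move the wrong way under $R_{z}$; extracting a strict net decrease (rather than the first–order stationarity that is automatic at a critical point) is what forces the hypothesis $w\notin\hat{K}\cup(-\hat{K})$ into the estimate.

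For (iii) I would prove $\psi$ is geodesically convex, in fact strictly convex, on $\hat{K}_{1}$. Since $K$ is full–dimensional $\hat{K}$ is pointed, so $\hat{K}_{1}$ is geodesically convex; for $w\in\hat{K}_{1}$ and any $y\in K_{1}$ we have $\langle w,y\rangle\ge0$, hence along any geodesic staying in $\hat{K}_{1}$ every $\rho(\cdot,y)$ remains in $[0,\pi/2]$ and is convex there, and $g$ increasing and convex on $[0,\pi/2]$ makes $g\circ\rho(\cdot,y)$ convex; strictness comes from the fact that all but a $\sigma$–null set of $y$ avoid the plane of the geodesic, so $\rho(\cdot,y)$ — and hence, $g$ being strictly increasing, also $g\circ\rho(\cdot,y)$ — is strictly convex on it (the case $n=2$ is checked directly). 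A strictly convex function on a convex set has a unique minimiser and no other local minimiser; by (ii) the global minimiser of $\psi$ on $\mathcal{S}^{n-1}$ lies in $\hat{K}_{1}$, so it is that unique point, and every local minimiser of $\psi$ lying outside $-\hat{K}$ lies, by (ii), in $\hat{K}_{1}$ and therefore equals it.

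For (iv) it remains to rule out local minimisers in $-\hat{K}_{1}$. If $w\in-\hat{K}_{1}$, then $\langle w,y\rangle\le0$, so $\rho(w,y)\ge\pi/2$, for every $y\in K_{1}$; feeding this and $\langle v,y\rangle^{2}\le\sin^{2}\rho(w,y)$ into the second–derivative identity and using that $g$ is increasing and concave on $[\pi/2,\pi]$ gives, for every geodesic direction $v$ at $w$, $\frac{d^{2}}{dt^{2}}\psi(\gamma(t))|_{0}\le0$ (legitimate for $n\ge3$ after a routine smoothing of $g$ by $C^{2}$ functions with the same monotonicity and convexity pattern; $n=2$ is done by hand). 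This quantity is actually strictly negative, since for a.e. $y\in K_{1}$ one has $\langle w,y\rangle<0$, $\rho(w,y)>\pi/2$ and $y\notin\mathrm{span}\{w,v\}$, making the second–order term strictly negative there — here one uses that the range of $\rho(w,\cdot)$ over $K_{1}$ is a nondegenerate interval, again because $K$ is full–dimensional, together with $g$ strictly increasing. Hence $w$ is not a local minimiser, and with (iii) this shows the global minimiser is the unique local minimiser of $\psi$ on $\mathcal{S}^{n-1}$.
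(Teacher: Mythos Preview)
Your treatment of (ii) has a genuine gap at the crucial point. You correctly pick the reflection $R_z$ with $z$ from part (iii) of the Lemma, but your argument for $\psi(R_zw)<\psi(w)$ is only a gesture: ``weighing the gain on $\{\langle z,y\rangle>0\}$ against the loss on the remainder, using that $z$ sits deep inside $K$\ldots keeps the bad part under control'' is precisely the hard step, and you give no mechanism for why the gain wins. You even concede that $K_1$ can contain points $y$ with $\langle z,y\rangle<0$ that move the wrong way. The paper closes this with a structural observation you are missing. After changing variables by $V=R_z$ on the half of the integral over $L=\{\langle z,\cdot\rangle<0\}$, one gets
\[
\psi(w)-\psi(Vw)=\int_R\bigl(g(\rho(w,u))-g(\rho(Vw,u))\bigr)\bigl(1_K(u)-1_K(Vu)\bigr)\,d\sigma(u),
\]
and the point is that $1_K(u)\ge 1_K(Vu)$ for \emph{every} $u\in R$, strictly on a set of positive measure. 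This is proved by duality: since $\langle z,y\rangle\ge c>0$ for all $y\in\hat K_1$, any $u\in R$ with $Vu\in K$ satisfies $\langle u',y\rangle\ge\langle Vu,y\rangle\ge0$ for all $y\in\hat K_1$ and all $u'$ in a ball around $u$, so $u\in(\hat K)^{\wedge}=K$, in fact $u\in\mathrm{int}(K)$. Thus $V(K)\cap R$ is a proper closed subset of $\mathrm{int}(K)\cap R$, and the integrand is nonnegative and positive on an open set. Without this containment you have no way to compare the gain and the loss; ``$z$ sits deep inside $K$'' and the auxiliary point $x_0$ do not by themselves deliver it.

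Your handling of (iii) and (iv) is genuinely different from the paper and essentially correct. The paper proves strict midpoint convexity of $\psi$ on $\hat K_1$ (and strict midpoint concavity on $-\hat K_1$) by reflecting through the geodesic midpoint and invoking the spherical triangle inequality, never differentiating $g$; you instead show $t\mapsto\rho(\gamma(t),y)$ is convex on $[0,\pi/2]$ and concave on $[\pi/2,\pi]$ via the explicit second derivative, then compose with the convex/concave pieces of $g$. Your route works but costs you the $C^2$ smoothing of $g$ and a separate check for $n=2$, whereas the paper's reflection argument is uniform in dimension and needs only continuity and monotonicity of $g$. Once the hole in (ii) is filled, either approach to (iii)--(iv) completes the proof.
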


\begin{proof}
(i) is immediate since $\mathcal{S}^{n-1}$ is compact and $\psi $ is
continuous.

(ii) Fix $w\in \mathcal{S}^{n-1}$, $w\notin \hat{K}\cup \left( -\hat{K}%
\right) $. We will first show that there can be no local minimum of $\psi $
at $w$. Let $\epsilon >0$ be arbitrary and choose $z$ as in the lemma (iii).
The functional $z$ divides the sphere $\mathcal{S}^{n-1}$ into two open
hemispheres%
\[
L=\left\{ u:\left\langle z,u\right\rangle <0\right\} \text{ and }R=\left\{
u:\left\langle z,u\right\rangle >0\right\} , 
\]%
and an equator of $\sigma $-measure zero. Note that $w\in L$ and $\hat{K}%
_{1}\subseteq R$. We can write 
\[
c=\min_{y\in \hat{K}_{1}}\left\langle y,z\right\rangle >0, 
\]%
since $\hat{K}_{1}$ is compact and $y\mapsto \left\langle y,z\right\rangle $
is continuous. With $V$ we denote the reflection operator which exchanges
points of $L$ and $R$%
\[
Vx=-\left\langle x,z\right\rangle z+\left( x-\left\langle x,z\right\rangle
z\right) . 
\]%
$V$ is easily verified to an isometry and $V^{2}=I$.

Suppose now that $u\in R$ and $Vu\in K$. We claim that $u$ is in the
interior of $K$. Indeed, if $u^{\prime }\in 
\mathbb{R}
^{n}$ satisfies $\left\Vert u-u^{\prime }\right\Vert <2\left\langle
u,z\right\rangle c$, then for all $y\in \hat{K}_{1}$ we have 
\begin{eqnarray*}
\left\langle u^{\prime },y\right\rangle &=&\left\langle u,y\right\rangle
-\left\langle u-u^{\prime },y\right\rangle \geq \left\langle
u,y\right\rangle -2\left\langle u,z\right\rangle c \\
&\geq &\left\langle u,y\right\rangle -2\left\langle u,z\right\rangle
\left\langle z,y\right\rangle =\left\langle Vu,y\right\rangle \geq 0,
\end{eqnarray*}%
so $u^{\prime }\in \left( \hat{K}\right) ^{\symbol{94}}=K$, by part (ii) of
the lemma. This establishes the claim and shows that $V\left( K\right) \cap
R $ is contained in the interior of $K$. It follows that%
\begin{equation}
\forall u\in R,1_{K}\left( u\right) \geq 1_{K}\left( Vu\right) .
\label{reflection inequality}
\end{equation}%
Also $V\left( K\right) \cap R$ is relatively closed in $R$ while int$\left(
K\right) \cap R$ is open in $R$. Since $R$ is connected they can only
coincide if $V\left( K\right) \cap R=R$. But this is impossible, since then 
\begin{eqnarray*}
L\cup R &=&V\left( V\left( K\right) \cap R\right) \cup \left( V\left(
K\right) \cap R\right) \subseteq V\left( V\left( K\cap L\right) \right) \cup 
\text{int}\left( K\right) \\
&=&\left( K\cap L\right) \cup \text{int}\left( K\right) \subseteq K,
\end{eqnarray*}%
and $K$ is assumed to be a proper cone. So $V\left( K\right) \cap R$ is a
proper subset of int$\left( K\right) \cap R$. The inequality (\ref%
{reflection inequality}) is therefore strict on the nonempty open set $%
\left( \text{int}\left( K\right) \cap R\right) \backslash \left( V\left(
K\right) \cap R\right) $.

Using isometry and unipotence of $V$ we now obtain%
\begin{eqnarray*}
\psi \left( w\right) -\psi \left( Vw\right) &=&\int_{R}\left( g\left( \rho
\left( w,u\right) \right) -g\left( \rho \left( Vw,u\right) \right) \right)
1_{K}\left( u\right) d\sigma \left( u\right) + \\
&&+\int_{L}\left( g\left( \rho \left( w,u\right) \right) -g\left( \rho
\left( Vw,u\right) \right) \right) 1_{K}\left( u\right) d\sigma \left(
u\right) \\
&=&\int_{R}\left( g\left( \rho \left( w,u\right) \right) -g\left( \rho
\left( Vw,u\right) \right) \right) \left( 1_{K}\left( u\right) -1_{K}\left(
Vu\right) \right) d\sigma \left( u\right) \\
&>&0.
\end{eqnarray*}%
The inequality holds, because the first factor $\left( g\left( \rho \left(
w,u\right) \right) -g\left( \rho \left( Vw,u\right) \right) \right) $ in the
last integral is always positive for $u\in R$, since $g$ is increasing and $%
\rho $ is increasing in the euclidean distance. The second is nonnegative
and positive on a set of positive measure. Since $\left\Vert w-Vw\right\Vert
=2\epsilon $ and $\epsilon >0$ was arbitrary, we see that every neighborhood
of $w$ contains a point where $\psi $ has a smaller value than at $w$. We
conclude that $w$ cannot be a local minimum of $\psi $, which proves the
first assertion of (ii).

If $w\notin K$ choose $z$ as in part (i) of the lemma and let $W$ be the
isometry $Wx=-\left\langle x,z\right\rangle z+\left( x-\left\langle
x,z\right\rangle z\right) $. The $\forall u\in K$ we have $g\left( \rho
\left( w,u\right) \right) >g\left( \rho \left( Ww,u\right) \right) $, so $%
\psi \left( w\right) >\psi \left( Ww\right) $ and $w$ cannot be a global
minimizer of $\psi $. So every global minimizer must be in $K\cap \left( 
\hat{K}\cap \left( -\hat{K}\right) \right) $. Since $K_{1}\cap \left( -\hat{K%
}_{1}\right) $ is obviously empty the second assertion of (ii) follows.

(iii) Now let $w_{1},w_{2}\in \hat{K}_{1}$ with $w_{1}\neq w_{2}$. Connect
them with a geodesic in $\hat{K}_{1}$ and let $w^{\ast }\in \hat{K}_{1}$ be
the midpoint of this geodesic, such that $\rho \left( w_{1},w^{\ast }\right)
=\rho \left( w^{\ast },w_{2}\right) =\rho \left( w_{1},w_{2}\right) /2\leq
\pi /2$. We define a map $U$ by%
\[
Ux=\left\langle x,w^{\ast }\right\rangle w^{\ast }-\left( x-\left\langle
x,w^{\ast }\right\rangle w^{\ast }\right) . 
\]%
Geometrically $U$ is reflection on the one-dimensional subspace generated by 
$w^{\ast }$. Note that $w_{2}=Uw_{1}$ and that $\rho \left( u,Uu\right)
=2\rho \left( u,w^{\ast }\right) $ if $\rho \left( u,w^{\ast }\right) \leq
\pi /2$ and that $\rho \left( u,Uu\right) =2\pi -2\rho \left( u,w^{\ast
}\right) $ if $\rho \left( u,w^{\ast }\right) \geq \pi /2$.

Take any $u\in K_{1}$. Since $w^{\ast }\in \hat{K}_{1}$ we have $\rho \left(
u,w^{\ast }\right) \leq 2\pi $, whence $\rho \left( u,Uu\right) =2\rho
\left( u,w^{\ast }\right) $. All the four points $w_{1},w_{2},u$ and $Uu$
have at most distance $\pi /2$ from $w^{\ast }$ and lie therefore together
with $w^{\ast }$ on a common hemisphere. By the triangle inequality%
\begin{eqnarray*}
2\rho \left( u,w^{\ast }\right) &=&\rho \left( u,Uu\right) \\
&\leq &\rho \left( u,w_{1}\right) +\rho \left( w_{1},Uu\right) =\rho \left(
u,w_{1}\right) +\rho \left( Uw_{1},UUu\right) \\
&=&\rho \left( u,w_{1}\right) +\rho \left( w_{2},u\right) .
\end{eqnarray*}%
If $u$ does not lie on the geodesic through $w_{1}$ and $w_{2}$ and not at
distance $\pi /2$ from $w^{\ast }$ strict inequality holds, and since $K_{1}$
has nonempty interior strict inequality holds on an open subset of $K_{1}$.
If $g$ is increasing and convex in $\left[ 0,\pi /2\right] $ then dividing
by 2, applying $g$ and integrating over $K_{1}$ we get%
\[
\psi \left( w^{\ast }\right) <\left( 1/2\right) \left( \psi \left(
w_{1}\right) +\psi \left( w_{2}\right) \right) . 
\]%
It follows that there can be at most one point in $\hat{K}_{1}$ where the
gradient of $\psi $ vanishes, and this point, if it exists, must correspond
to a local minimum. By (ii) this is the unique global minimum and the only
local minimum outside $-\hat{K}$, which establishes (iii).

(iv) If $x_{1},x_{2}\in -\hat{K}_{1}$ and $x^{\ast }\in -\hat{K}_{1}$ is
their midpoint, then for $u\in K$ we obtain, using $\rho \left(
x_{i},u\right) =\pi -\rho \left( -x_{i},u\right) $ and a reasoning analogous
to the above,%
\[
\rho \left( u,w^{\ast }\right) \geq \left( 1/2\right) \left( \rho \left(
u,w_{1}\right) +\rho \left( u,w_{2}\right) \right) , 
\]%
the inequality being again strict on a set of positive measure and preserved
under application of a function $g$ which is increasing and concave in $%
\left[ \pi /2,\pi \right] $, so that%
\[
\psi \left( w^{\ast }\right) >\left( 1/2\right) \left( \psi \left(
w_{1}\right) +\psi \left( w_{2}\right) \right) . 
\]%
It again follows that there can be at most one point in $-\hat{K}_{1}$ where
the gradient of $\psi $ vanishes, and this point must now correspond to a
local maximum. We conclude that $\psi $ has a unique local minimum which
lies in $\hat{K}_{1}$.\bigskip
\end{proof}

\textbf{Remark.} An example of a function as in (iii) is $g\left( t\right)
=t^{2}$, in which case the minimizer is the spherical mass centroid
considered in \cite{Galperin 1993} and \cite{Buss 2001}. Examples of
functions as in (iv) are of course the identity function, in which case we
obtain the result stated in the introduction. We could also set $g\left(
t\right) =2\left( 1-\cos t\right) $, in which case the function reads%
\[
\psi \left( w\right) =\int_{K_{1}}\left\Vert w-y\right\Vert ^{2}d\sigma
\left( y\right) \text{.}
\]%
In this case uniqueness of the minimum can be established with much simpler
methods.\bigskip 

\textbf{Acknowledgement.} The author is grateful to Andreas Argyriou,
Massimiliano Pontil and Erhard Seiler for many encouraging discussions.

\end{document}